\newcommand{\llangle}{\langle\!\langle}
\newcommand{\rrangle}{\rangle\!\rangle}
\newcommand{\fTT}[1]{f_{\mathrm{TT}(#1)}}
\newtheorem*{theorem*}{Theorem}
\newtheorem*{lemma*}{Lemma}
\newtheorem*{prop*}{Proposition}
\newcommand{\ttnormal}{\mathrm{TT}_\mathcal{N}}
\DeclareMathOperator*{\mpo}{\operatorname{MPO}}
\newcommand{\ttrad}{\mathrm{TT}_\text{Rad}}
\newcommand{\Ex}[1]{\mbox{}\mathbb{E}\left[#1\right]}
\newcommand{\Var}[1]{\mbox{}\textup{Var}\left(#1\right)}
\newcommand{\ts}{\mathsf{T}}
\newcommand{\RR}[2]{\mathbb{R}^{#1 \times #2}} 
\newcommand{\vectorize}{\mathrm{vec}}
\newcommand{\St}{\bm{\mathcal{S}}}
\newcommand{\At}{\bm{\mathcal{A}}}
\newcommand{\Bt}{\bm{\mathcal{B}}}
\newcommand{\Xt}{\bm{\mathcal{X}}}
\newcommand{\Mt}{\bm{\mathcal{M}}}
\newcommand{\Gt}{\bm{\mathcal{G}}}
\newcommand{\Tt}{\bm{\mathcal{T}}}
\newcommand{\TT}[1]{\llangle #1 \rrangle}
\newcommand{\ab}{\mathbf{a}}
\newcommand{\bb}{\mathbf{b}}
\newcommand{\ub}{\mathbf{u}}
\newcommand{\vb}{\mathbf{v}}
\newcommand{\xb}{\mathbf{x}}
\newcommand{\yb}{\mathbf{y}}
\newcommand{\Ab}{\mathbf{A}}
\newcommand{\Bb}{\mathbf{B}}
\newcommand{\Ib}{\mathbf{I}}
\newcommand{\Mb}{\mathbf{M}}
\newcommand{\Xb}{\mathbf{X}}
\newcommand{\BlackBox}{\rule{1.5ex}{1.5ex}}  
\def\QED{~\rule[-1pt]{5pt}{5pt}\par\medskip}
\newenvironment{proof}{\par\noindent{\bf Proof\ }}{\hfill\BlackBox\\[2mm]}
\newtheorem{theorem}{Theorem}
\newtheorem{lemma}[theorem]{Lemma}
\newtheorem{proposition}[theorem]{Proposition}
\newtheorem{definition}{Definition}
\newcommand{\EE}{\mathbb{E}} 
\newcommand{\PP}{\mathbb{P}} 
\renewcommand{\RR}{\mathbb{R}} 
\newcommand{\tr}{\mathop{\mathrm{tr}}}
\newcommand{\eg}{\emph{e.g.} }
\newcommand{\nbr}[1]{\left\|#1\right\|}
\newcommand{\abs}[1]{\left|#1\right|}
\newcommand{\inner}[2]{\left\langle #1,#2 \right\rangle}
\newcommand{\norm}[1]{\|#1\|}
\newcommand{\zero}{\mathbf{0}} 
\title{
 Rademacher Random Projections with Tensor Networks }
\author{
 Beheshteh T. Rakhshan\\
DIRO \& Mila \\
 Université de Montréal\\
  \texttt{rakhshab@mila.quebec}

 \And

     Guillaume Rabusseau \\
   DIRO \& Mila,
  CIFAR AI Chair \\
  Université de Montréal\\
  \texttt{grabus@iro.umontreal.ca}  \\
}
\begin{document}
\maketitle

\begin{abstract}
Random projection~(RP) have recently emerged as popular techniques in the machine learning community for their ability in reducing the dimension of very high-dimensional tensors. Following the work in \cite{rakhshan2020tensorized}, we consider a tensorized random projection relying on Tensor Train (TT) decomposition where each element of the core tensors is drawn from a Rademacher distribution.
Our theoretical results reveal that the Gaussian low-rank tensor represented in compressed form in TT format in \cite{rakhshan2020tensorized} can be replaced by a TT tensor with core elements drawn from a Rademacher distribution with the same embedding size.
Experiments on synthetic data demonstrate that tensorized Rademacher RP can outperform the tensorized Gaussian RP studied in \cite{rakhshan2020tensorized}.
In addition, we show both theoretically and experimentally, that the tensorized RP in the Matrix Product Operator~(MPO) format 
is not a Johnson-Lindenstrauss transform~(JLT) and therefore not a well-suited random projection map.
\end{abstract}

\section{Introduction}
Tensor decompositions are popular techniques used to effectively deal with high-dimensional tensor computations. They recently become popular in the machine learning community for their ability to perform operations on very high-order tensors and successfully have been applied in neural networks~\cite{novikov2014putting,novikov2015tensorizing}, supervised learning~\cite{stoudenmire2016supervised,novikov2016exponential}, unsupervised learning~\cite{stoudenmire2018learning,miller2021tensor,han2018unsupervised}, neuro-imaging~\cite{Zhou2013}, computer vision~\cite{Lu2013} and signal processing~\cite{Cichocki2009,sidiropoulos2017tensor} to name a few. There are different ways of decomposing high-dimensional tensors efficiently. Two most powerful decompositions, CP~\cite{hitchcock1927expression} and Tucker~\cite{tucker1966some} decompositions, can represent very high-dimensional tensors in a compressed form. However, the number of parameters in the Tucker decomposition grows exponentially with the order of a tensor. While in the CP decomposition, the number of parameters scales better, even computing the rank is an NP-hard problem~\cite{hillar2013most,kolda2009tensor}. Tensor Train (TT) decomposition~\cite{oseledets2011tensor} fixed these challenges as the number of parameters grows linearly with the order of a tensor and enjoys efficient and stable numerical algorithms.

In parallel, recent advances in Random Projections (RPs) and Johnson-Lindestrauss (JL) embeddings have succeeded in scaling up classical algorithms to high-dimensional data~\cite{vempala2005random,bingham2001random}. While many efficient random projection techniques have been proposed to deal with high-dimensional vector  data~\cite{achlioptas2002sampling,ailon2009fast,ailon2013almost}, it is not the case for high-order tensors. To address this challenge, it is crucial to find efficient RPs to deal with the curse of dimensionality caused by very high-dimensional data. Recent advances in employing JL transforms for dealing with high-dimensional tensor inputs offer efficient embeddings for reducing computational costs and memory requirements~\cite{rakhshan2020tensorized,jin2019faster,sun2018tensor,malik2020guarantees,li2021statistically,diao2018sketching}. In particular, Feng et al.~\cite{feng2020tensor} propose to use a rank-1 Matrix Product Operator~(MPO) parameterization  of a random projection.  Similarly, Batselier et al.~\cite{batselier2018computing} used the MPO format to propose an algorithm for randomized SVD of very high-dimensional matrices. In contrast, \cite{rakhshan2020tensorized} propose to decompose each row of the random projection matrix using  the TT format to speed up classical Gaussian RP for very high-dimensional input tensors efficiently, without flattening the structure of the input into a vector.  

Our contribution is two-fold. First, we show that tensorizing an RP using the MPO format does not lead to a JL transform by showing that even in the case of matrix inputs, the variance of such a map does not decrease to zero as the size of embedding increases. This is in contrast with the map we proposed in~\cite{rakhshan2020tensorized} which is a valid JL transform. Second, our results demonstrate that the tensorized Gaussian RP in~\cite{rakhshan2020tensorized} can be replaced by a simpler and faster projection using a Rademacher distribution instead of a standard Gaussian distribution. We propose a tensorized RP akin to tensorized Gaussian RP by enforcing each row of a matrix $\Ab\in\RR^{k\times d^N}$ where $k\ll d^N$ to have a low rank tensor structure (TT decomposition) with core elements drawn independently from a Rademacher distribution. Our results show that the Rademacher projection map still benefits from JL transform properties while preserving the same bounds as the tensorized Gaussian RP without any sacrifice in quality of the embedding size. Experiments show that in practice, the performance of the tensorized RP with Rademacher random variables outperforms tensorized Gaussian RP since it reduces the number of operations as it does not require any multiplication. 

\section{Preliminaries}
Lower case bold letters denote vectors,~\eg $\ab$, upper case bold letters denote matrices,~\eg $\Ab$, and bold calligraphic letters denote higher order tensors,~\eg $\At$.  The 2-norm of a vector $\vb$ is denoted by $\norm{\vb}_2$ or simply $\norm{\vb}$. The symbol "$\circ$" denotes the outer product~(or tensor product) between vectors. We use $\vectorize(\Mb)\in \RR^{d_1.d_2}$ to denote the column vector
obtained by concatenating the columns of the matrix $\Mb\in\RR^{d_1\times d_2}$. The $d\times d$ identity matrix is denoted by $\Ib_d$. For any integer $i$ we use $[i]$ to denote the set of integers from 1 to $i$. 
\subsection{Tensors}
A tensor $\Tt\in\RR^{d_1\times\dots\times d_N}$ is a multidimensional array and its Frobenius norm is defined by $\nbr\Tt_F^2 = \inner\Tt\Tt$. If $\At\in\RR^{I_1\times\dots\times I_N}$ and $\Bt\in\RR^{J_1\times\dots\times J_N}$, we use $\At\otimes\Bt\in\RR^{I_1J_1\times\dots\times I_NJ_N}$ to denote the Kronecker product of tensors. 
Let $\St\in\RR^{d_1\times\cdots\times d_N}$ be an $N$-way tensor. For $n\in[N]$, let $\Gt^n\in\RR^{R_{n-1}\times d_n\times R_n}$ be 3rd order core tensors with $R_0=R_N=1$ and $R_1=\dots=R_{N-1}=R$. A rank $R$ \textit{tensor train decomposition} of  $\St$ is given by 
$\St_{i_1,\cdots,i_N} = (\Gt^1)_{i_1,:}( \Gt^2)_{:,i_2,:}\cdots(\Gt^{N-1})_{:,i_{N-1},:}(\Gt^N)_{:,i_N}$, for all indices $i_1\in[d_1],\cdots,i_N\in[d_N]$; we will use the notation $\St=\TT{\Gt^1,\Gt^2,\cdots,\Gt^{N-1},\Gt^N}$ to denote the TT decomposition. 

Suppose $\Tt\in\RR^{I_1\times J_1\times \dots\times I_N\times J_N}$. For $n\in[N]$, let  $\At^n\in\RR^{R_{n-1}\times I_n\times J_n\times R_n}$  with $R_0=R_N=1$ and $R_1=\dots=R_{N-1}=R$. A rank $R$ MPO decomposition of $\Tt$ is given by $\Tt_{i_1,j_1,\dots,i_N,j_N} = (\At^1)_{i_1,j_1,:}(\At^2)_{:,i_2,j_2,:}\dots(\At^{N-1})_{:,i_{N-1},j_{N-1},:}(\At^N)_{:,i_N,j_N}$ for all indices $i_1\in[I_1],\cdots,i_N\in[I_N]$ and $j_1\in[J_1],\dots, j_N\in[J_N]$; we will use the notation $\Tt = \mpo((\At^n)_{n=1}^N)$ to denote the MPO format. 
\subsection{Random Projection}
Random projections~(RP) are efficient tools for projecting linearly  high-dimensional data down into a lower dimensional space while preserving the pairwise distances between points. This is the classical result of the Johnson-Lindenstrauss lemma~\citep{johnson1984extensions} which states that any $n$-point set $P\subseteq \RR^d$ can be projected linearly into a $k$-dimensional space with $k =\Omega(\varepsilon^{-2}\log{(n)})$. One of the simplest way to generate such a projection is using a $d\times k$ random Gaussian matrix $\Ab$, i.e., the entries of $\Ab$ are drawn independently from a standard Gaussian distribution with mean zero and variance one. More precisely, for any two points $\ub,\vb\in P\subseteq \RR^d$ the following inequality holds with high probability
\begin{align*}
(1-\varepsilon)\norm{\ub-\vb}^2\leq\norm{f(\ub)-f(\vb)}^2\leq(1+\varepsilon)\norm{\ub-\vb}^2,
\end{align*} 
where $f:\RR^d\to\RR^k\ (k\ll d)$ is a linear map $f(\xb) = \frac{1}{\sqrt{k}}\Ab\xb$ and $\Ab\in\RR^{k\times d}$ is a random matrix. We also call $f$ a Johnson-Lindenstrauss transform (JLT).
To have a JLT, the random projection map $f$ must satisfy the following two properties: (i) Expected isometry, i.e., $\Ex{\norm{f(\xb)}^2}=\norm{\xb}^2$ and (ii) Vanishing variance, that is  $\Var{\|f(\xb)\|^2}$ decreases to zero as the embedding size $k$ increases.
\vspace*{-0.1cm}
\section{Random Projections based on Tensor Decomposition}

\subsection{Matrix Product Operator Random Projection}\label{subsec:MPO-RP}

Classical random projection maps $f:\xb\to\frac{1}{\sqrt{k}}\Ab\xb$ deal with  high-dimensional data using a dense random matrix $\Ab$. Due to  storage and computational constraints, sparse and very sparse RPs have been proposed in~\cite{achlioptas2003database,li2006very}, but even sparse RPs still suffer from the curse of dimensionality and  cannot handle high-dimensional tensor inputs. 
To alleviate this difficulty, tensor techniques can be used to compress RP maps.
One natural way for this purpose is to compress the dense matrix $\Ab$ with the Matrix Product Operator~(MPO) format~\cite{oseledets2010approximation}. As shown in Figure~\ref{figure:MPO-TN}, relying on the MPO format, we can define a random projection map which embeds any tensor $\Xt\in\RR^{d_1\times\cdots\times d_N}$ into $\RR^{k_1\times\cdots\times k_N}$, where $k = k_1k_2\dots k_N\ll d_1d_2\cdots d_N$ is the embedding dimension. This map is defined element-wise by
\begin{align}
   f(\Xt)_{j_1,\dots,j_N}
    = 
    \frac{1}{\sqrt{R^{N-1}k}}\sum_{i_1,\dots i_N}\!\!\!\!\left(\mpo((\Gt^{n})_{n=1}^N)\right)_{i_1,\dots,i_N,j_1,\dots,j
    _N}\Xt_{i_1,\dots i_N},\label{eq1}
\end{align} 
where $j_n\in[k_n],i_n\in[d_n]$, $\Gt^1\in\RR^{1\times d_1\times R}, \Gt^N\in\RR^{R\times d_N\times 1}$, $\Gt^n\in\RR^{R\times d_n\times k_n \times R}$ for $1<n<N$ and the entries of each core are drawn independently from standard Gaussian distribution. We call the map defined in eqn.~\ref{eq1} an MPO RP. Moreover, by vectorizing $f(\Xt)$ we can consider the RP $f$ as a map from $\RR^{d_1\times\dots\times d_N}\to\RR^k$.  
Particular cases of this general MPO RP formulation have been considered before. Feng et al~\cite{feng2020tensor} consider the case where $R=1$ and the entries of each core are drawn i.i.d from a Rademacher distribution. Batselier et al~\cite{batselier2018computing} consider a MPO RP where $k=k\cdot 1 \cdot 1 \cdot...\cdot 1$ for randomized SVD in the MPO format.\footnote{For simplicity and clarity we assume all ranks are equal where in \cite{batselier2018computing} different ranks for different modes of the MPO are considered.}
\begin{figure*}[h!]
\begin{center}
\includegraphics[scale=0.6]{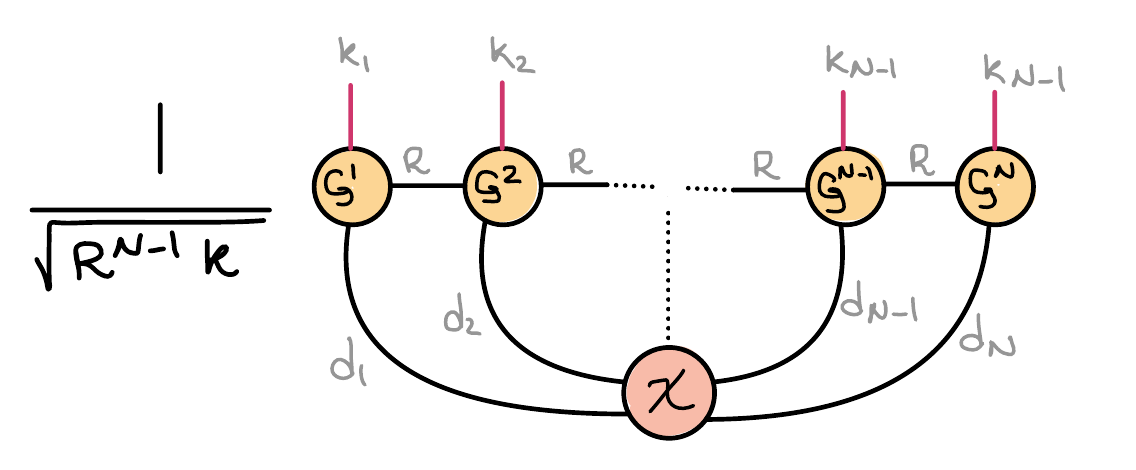}
\caption{Rank $R$ MPO RP in tensor network representation.}
\label{figure:MPO-TN}
\end{center}
\end{figure*}

Even though this map satisfies the expected isometry property, it is not JLT as its variance does not decrease to zero when the size of the random dimension increases. We show these properties in the following proposition by considering the particular case of $N=2$, $k=k.1.1\dots1$. 
\begin{proposition}\label{mpo-variance}
Let $\Xt\in\RR^{d_1\times \dots\times d_N}$. The MPO RP defined in~eqn.~\eqref{eq1} with $k=k\cdot 1 \cdot 1 \cdot...\cdot 1$ satisfies the following properties
 \begin{itemize}
\item $\EE\nbr{f(\Xt)}_2^2 = \nbr{\Xt}_F^2$,

\item $Var{\norm{f(\Xb)}_2^2}= \frac{2}{k}\nbr{\Xb}_F^4+\frac{2}{R}(1+\frac{2}{k})\tr((\Xb^\ts\Xb)^2)$~~~\text{for} $N=2$.
\end{itemize}
\end{proposition}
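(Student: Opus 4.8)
The plan is to unfold the MPO random projection in the special case $N=2$, $k = k\cdot 1$, into an explicit sum over the two core tensors and then compute the first and second moments of $\|f(\Xb)\|_2^2$ directly. With $N=2$ and the second mode size equal to $1$, the MPO cores are $\Gt^1 \in \RR^{1\times d_1\times R}$ (a matrix, call it $\Gb \in \RR^{d_1\times R}$ with $\Gb_{i_1,r} = (\Gt^1)_{1,i_1,r}$) and $\Gt^2 \in \RR^{R\times d_2 \times 1}$ (a matrix, call it $\Hb \in \RR^{R\times d_2}$ with $\Hb_{r,i_2} = (\Gt^2)_{r,i_2,1}$), all entries i.i.d.\ standard Gaussian. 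The output has a single free index $j\in[k]$ only through... wait — with $k = k\cdot1\cdot1\cdots1$ the free indices $j_2,\dots,j_N$ all range over a one-element set, so effectively there is one free index. I would rewrite \eqref{eq1} in this case as $f(\Xb)_j = \frac{1}{\sqrt{Rk}}\sum_{i_1,i_2} \sum_r \Gb^{(j)}_{i_1,r}\Hb^{(j)}_{r,i_2}\Xb_{i_1,i_2}$ where the superscript $(j)$ indicates that (because the middle-mode index $j$ is genuinely present in the core $\Gt^n$ for $1<n<N$) we actually get $k$ independent copies of the cores. Concretely $f(\Xb)_j = \frac{1}{\sqrt{Rk}}\, \sum_{r=1}^R (\gb^{(j)}_r)^\ts \Xb\, \hb^{(j)}_r$ where $\gb^{(j)}_r \in \RR^{d_1}$ and $\hb^{(j)}_r\in\RR^{d_2}$ are all mutually independent standard Gaussian vectors. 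So $\|f(\Xb)\|_2^2 = \frac{1}{Rk}\sum_{j=1}^k Z_j$ with $Z_j = \big(\sum_{r=1}^R (\gb^{(j)}_r)^\ts\Xb\,\hb^{(j)}_r\big)^2$, and the $Z_j$ are i.i.d.

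**The moment computations.**

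For the expectation, I would use $\EE[(\gb^\ts\Xb\hb)(\gb'^\ts\Xb\hb')] = \tr(\Xb^\ts\Xb)\,\delta_{rr'}$ (cross terms vanish because distinct $r$ give independent mean-zero vectors; the diagonal term is $\EE[\gb^\ts\Xb\hb\hb^\ts\Xb^\ts\gb] = \EE[\tr(\Xb\hb\hb^\ts\Xb^\ts)] = \tr(\Xb\Xb^\ts) = \|\Xb\|_F^2$). Hence $\EE[Z_j] = R\|\Xb\|_F^2$ and $\EE\|f(\Xb)\|_2^2 = \frac{1}{Rk}\cdot k\cdot R\|\Xb\|_F^2 = \|\Xb\|_F^2$, giving the first bullet. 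For the variance, since the $Z_j$ are i.i.d., $\Var\|f(\Xb)\|_2^2 = \frac{1}{(Rk)^2}\cdot k\cdot \Var(Z_1) = \frac{1}{R^2 k}\Var(Z_1)$, so everything reduces to computing $\EE[Z_1^2] = \EE\big[(\sum_r \gb_r^\ts\Xb\hb_r)^4\big]$. Expanding the fourth power gives a sum over $(r_1,r_2,r_3,r_4)$ of $\EE[\prod_{a}\gb_{r_a}^\ts\Xb\hb_{r_a}]$; by independence across distinct $r$ and mean-zero-ness, only index patterns where each value appears an even number of times survive — either all four equal (giving $R$ terms, each contributing $\EE[(\gb^\ts\Xb\hb)^4]$) or two distinct values each appearing twice (giving $3R(R-1)$ terms, each contributing $(\EE[(\gb^\ts\Xb\hb)^2])^2 = \|\Xb\|_F^4$, with the factor $3$ from the $\binom{4}{2}/... $ pairings). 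So I need the single scalar $m_4 := \EE[(\gb^\ts\Xb\hb)^4]$ for independent standard Gaussian $\gb\in\RR^{d_1}$, $\hb\in\RR^{d_2}$.

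**The one genuinely computational step.**

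The main obstacle — the only place real work happens — is evaluating $m_4 = \EE[(\gb^\ts\Xb\hb)^4]$. I would condition on $\hb$: given $\hb$, $\gb^\ts\Xb\hb \sim \Ncal(0, \|\Xb\hb\|^2)$, so $\EE[(\gb^\ts\Xb\hb)^4 \mid \hb] = 3\|\Xb\hb\|^4 = 3(\hb^\ts\Xb^\ts\Xb\hb)^2$. Then $m_4 = 3\,\EE_\hb[(\hb^\ts M\hb)^2]$ with $M = \Xb^\ts\Xb \succeq 0$, and for standard Gaussian $\hb$ the identity $\EE[(\hb^\ts M\hb)^2] = (\tr M)^2 + 2\tr(M^2)$ gives $m_4 = 3(\tr(\Xb^\ts\Xb))^2 + 6\tr((\Xb^\ts\Xb)^2) = 3\|\Xb\|_F^4 + 6\tr((\Xb^\ts\Xb)^2)$. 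Assembling: $\EE[Z_1^2] = R\,m_4 + 3R(R-1)\|\Xb\|_F^4 = R(3\|\Xb\|_F^4 + 6\tr((\Xb^\ts\Xb)^2)) + 3R(R-1)\|\Xb\|_F^4 = (3R^2)\|\Xb\|_F^4 + 6R\tr((\Xb^\ts\Xb)^2)$, wait let me check the claimed answer form. We want $\Var(Z_1) = \EE[Z_1^2] - (\EE Z_1)^2 = 3R^2\|\Xb\|_F^4 + 6R\tr((\Xb^\ts\Xb)^2) - R^2\|\Xb\|_F^4 = 2R^2\|\Xb\|_F^4 + 6R\tr((\Xb^\ts\Xb)^2)$. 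Hmm, then $\Var\|f\|^2 = \frac{1}{R^2k}\Var(Z_1) = \frac{2}{k}\|\Xb\|_F^4 + \frac{6}{Rk}\tr((\Xb^\ts\Xb)^2)$, whereas the statement claims $\frac{2}{k}\|\Xb\|_F^4 + \frac{2}{R}(1+\frac{2}{k})\tr((\Xb^\ts\Xb)^2) = \frac{2}{k}\|\Xb\|_F^4 + (\frac{2}{R}+\frac{4}{Rk})\tr((\Xb^\ts\Xb)^2)$. So I have slightly miscounted the combinatorics of the mixed terms, or missed a third index pattern (pairs like $r_1=r_2\ne r_3=r_4$ versus $r_1=r_3\ne r_2=r_4$ behave differently because $\gb_{r_1}^\ts\Xb\hb_{r_1}\cdot\gb_{r_1}^\ts\Xb\hb_{r_3}$ type products — actually with all four $r$'s the vectors $\gb_{r_a},\hb_{r_a}$ are paired, so the only surviving non-all-equal pattern really is two disjoint pairs, and for two disjoint pairs $\{a,b\}$ and $\{c,d\}$ the expectation factors as $\EE[(\gb^\ts\Xb\hb)^2]^2$; but if the pairing within the fourth power pairs, say, factor $1$ with factor $3$ which have \emph{different} $r$, then $\EE[\gb_{r_1}^\ts\Xb\hb_{r_1}]=0$ kills it — so actually only the "consecutive" interpretation... ). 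I would therefore redo this expansion very carefully, tracking which of the three Wick-type pairings of the four factors is compatible with which $r$-assignment; I expect the careful count to produce exactly the $\frac{2}{R}(1+\frac{2}{k})$ coefficient, and this bookkeeping is the crux of the proof. The conclusion to emphasize afterward is that the $\frac{2}{R}\tr((\Xb^\ts\Xb)^2)$ term has no $k$ in its denominator, so $\Var\|f(\Xb)\|_2^2 \not\to 0$ as $k\to\infty$, which is exactly why the MPO RP fails to be a JLT.
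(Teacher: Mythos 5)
Your proof breaks down at the very first step, where you set up the model: you assume that for each output index $j\in[k]$ the MPO cores are redrawn independently, so that the components $f(\Xb)_1,\dots,f(\Xb)_k$ are i.i.d. That is not the MPO RP of eqn.~\eqref{eq1}. In the MPO format each output mode sits on a single core, and for $N=2$ with $k=k\cdot 1$ the map is $\yb_\kappa=\sum_{i_1,i_2,r}\Gt^1_{i_1 r}\,\Gt^2_{r i_2\kappa}\,\Xb_{i_1 i_2}$: only $\Gt^2\in\RR^{R\times d\times k}$ carries the output index $\kappa$, while $\Gt^1\in\RR^{d\times R}$ is \emph{shared} by all $k$ components. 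The correlations this induces between $\yb_\kappa$ and $\yb_{\kappa'}$ for $\kappa\neq\kappa'$ are exactly where the $k$-independent term comes from: writing $\Hb^{(\kappa)}_{i_2 r}=\Gt^2_{r i_2\kappa}$ and $\Bb=\Xb\Hb^{(\kappa)}$, $\Bb'=\Xb\Hb^{(\kappa')}$, one has $\EE_{\Gt^1}\bigl[\langle\Gt^1,\Bb\rangle^2\langle\Gt^1,\Bb'\rangle^2\bigr]=\|\Bb\|_F^2\|\Bb'\|_F^2+2\langle\Bb,\Bb'\rangle^2$, and the cross term $2\,\EE\langle\Xb\Hb^{(\kappa)},\Xb\Hb^{(\kappa')}\rangle^2=2R\tr((\Xb^\ts\Xb)^2)$ appears in all $k(k-1)$ off-diagonal pairs, so it survives the $1/(R^2k^2)$ normalization and yields the $\frac{2}{R}(1+\frac{2}{k})\tr((\Xb^\ts\Xb)^2)$ contribution.

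Consequently the discrepancy you flag at the end is not a Wick-pairing bookkeeping slip that a more careful recount will repair. Under your i.i.d.-components model the variance is identically $\Var{Z_1}/(R^2k)=O(1/k)$ regardless of how the fourth moment of $Z_1$ decomposes, so no recount can produce a term without $k$ in the denominator; indeed your (internally consistent) answer $\frac{2}{k}\|\Xb\|_F^4+\frac{6}{Rk}\tr((\Xb^\ts\Xb)^2)$ describes a map that \emph{is} a JLT, which is the opposite of what the proposition asserts. Your computational toolkit is otherwise sound and cleaner than the paper's index-chasing with Isserlis' theorem --- conditioning to get $\EE[(\gb^\ts\Xb\hb)^4\mid\hb]=3\|\Xb\hb\|^4$ and the quadratic-form identity $\EE(\hb^\ts M\hb)^2=(\tr M)^2+2\tr(M^2)$ recover the stated formula once applied to the correct model, i.e.\ keeping $\Gt^1$ common to all components and summing over the $k$ diagonal and $k(k-1)$ off-diagonal pairs $(\kappa,\kappa')$. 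The expected-isometry bullet is unaffected, since it only uses second moments of each component separately.
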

\begin{proof}
We start by showing the expected isometry property. For a fixed $\kappa\in[k]$, suppose
$\yb_\kappa=\sum_{i_1,\dots i_N}\left(\mpo((\Gt^{n})_{n=1}^N\right)_{i_1,\dots,i_N,\kappa}\Xt_{i_1,\dots i_N}$
and $\St_\kappa = \mpo((\Gt^n)_{n=1}^N)_{:,:,\dots,:,\kappa}$. With these definitions $\yb=[\yb_1,\dots,\yb_k]$ and $f(\Xt)=\frac{1}{\sqrt{R^{N-1}k}}\yb$. As it is shown in~\cite{rakhshan2020tensorized} (e.g., see section 5.1), for $\Tt=\TT{\Mt^1,\dots,\Mt^N}$ with the entries of each core tenors drawn independently from a Gaussian distribution with mean zero and variance one, we have $\EE\langle\Tt\otimes\Tt,\Xt\otimes\Xt\rangle= R^{N-1}\norm{\Xt}_F^2$. Therefore, $\St_\kappa = \TT{\Gt^1,\dots,\Gt^N_{:,:,\kappa}}$ and
$
    \EE[\yb_\kappa^2] 
    =
    \EE[\langle \St_\kappa\otimes\St_\kappa,\Xt\otimes\Xt\rangle]
    =
    R^{N-1}\norm{\Xt}_F^2.
$
From which we can conclude $\EE[\nbr{f(\Xt)}^2]=\frac{1}{R^{N-1}k}\sum_\kappa \EE[\yb_\kappa^2] = \nbr{\Xt}_F^2$.

Now, in order to find a bound for variance of $\norm{\yb}_2^2$ we need first to find a bound for $\EE[\left\|{\yb}\right\|_2^4]$.
For $N=2$, let $\Tt=\mpo(\Gt^1,\Gt^2)$ and $\yb_k=\sum_{i_1,i_2}\Tt_{i_1,i_2,k}\Xb_{i_1,i_2} = \sum_{i_1,i_2}\sum_{r}\Gt^1_{i_1r}\Gt^2_{ri_2k}\Xb_{i_1i_2}$. In terms of tensor network diagrams, we have
$
\yb = 
$
\begin{tikzpicture}[baseline=0.5ex]
\tikzset{tensor/.style = {minimum size = 0.5cm,shape = circle,thick,draw=black,inner sep = 0pt}, edge/.style   = {thick,line width=.4mm},every loop/.style={}}
\def\y{0}
\def\x{0}
\def\inc{2}
\node[tensor,fill=red!50!white] (G1) at (\x,\y+1/2) {\scalebox{0.85}{$\Gt^1$}};
\node[tensor,fill=blue!40!red!60!white] (G2) at (\x+\inc,\y+1/2) {\scalebox{0.85}{$ \Gt^{2}$}};
\node[tensor,fill=blue!40!white] (X) at (\x+1,\y-1/4) {\scalebox{0.85}{$\Xb$}};
\draw[edge] (G1) -- (G2)node [above,midway] {\scalebox{0.6}{\textcolor{gray}{$R$}}};
\draw[edge] (G1) -- (X) node [above,midway] {\scalebox{0.6}{\textcolor{gray}{$d$}}};
\draw[edge] (G2) -- (X) node [above,midway] {\scalebox{0.6}{\textcolor{gray}{$d$}}};
\draw[edge] (\x+2.25,\y+1/2) -- (\x+3,\y+1/2)  node [above,midway] {\scalebox{0.6}{\textcolor{gray}{$k$}}};
\end{tikzpicture}.
By defining a tensor $\Mt\in\RR^{d\times R \times d \times R}$ element-wise via $\Mt_{i_1r_1i_2r_2} = \sum_{j_1,j_2,k}\Xb_{i_1j_1}\Gt^2_{j_1r_1k}\Gt^2_{j_2r_2k}\Xb_{i_2j_2}$, since $\Gt^1\sim\mathcal{N}(\zero,\Ib)$ and by using Isserlis' theorem~\citep{isserlis1918formula} we obtain
\begin{align*}
    &\EE[\nbr{\yb}_2^4] 
    = 
    \EE[\langle(\Gt^1)^{\otimes 4},\Mt^{\otimes 2}\rangle] = \langle\EE[(\Gt^1)^{\otimes 4}],\EE[\Mt^{\otimes 2}]\rangle\\
    &= 
    \sum_{i_1,\dots,i_4}\sum_{r_1,\dots,r_4} \EE[\Gt^1_{i_1r_1}\Gt^1_{i_2r_2}\Gt^1_{i_3r_3}\Gt^1_{i_4r_4}]\EE[\Mt_{i_1r_1i_2r_2}\Mt_{i_3r_3i_4r_4}]\\
    &=\!\!\!
    \sum_{i_1,\dots,i_4\atop r_1,\cdots r_4}\!\!\!\EE\left({ \delta_{i_1i_2}\delta_{i_3i_4}\delta_{r_1r_2}\delta_{r_3r_4}\!\!+\!\delta_{i_1i_3}\delta_{i_2i_4}\delta_{r_1r_3}\delta_{r_2r_4}\!\!+\!\delta_{i_1i_4}\delta_{i_2i_3}\delta_{r_1r_4}\delta_{r_2r_3}}\right)
    \EE[\Mt_{i_1r_1i_2r_2}\Mt_{i_3r_3i_4r_4}].
        \end{align*}
        It then follows that
        \begin{align*}
    \EE[\nbr{\yb}_2^4] 
    &=
    \EE\sum_{i_1,i_3\atop r_1,r_3}\Mt_{i_1r_1i_1r_1}\Mt_{i_3r_3i_3r_3}+\EE\sum_{i_1,i_4\atop r_1,r_4}\Mt_{i_1r_1i_4r_4}\Mt_{i_1r_1i_4r_4}
    +\EE\sum_{i_1,i_2\atop r_1,r_2}\Mt_{i_1r_1i_2r_2}\Mt_{i_2r_2i_1r_1}\\
    &=
    \EE\left[\tr\left(\Xb\Gt^2_{(2)}(\Gt^2_{(2)})^\ts\Xb^\ts\right)\tr\left(\Xb\Gt^2_{(2)}(\Gt^2_{(2)})^\ts\Xb^\ts\right)\right]
     +
     2\EE\sum_{i_1,i_4}\sum_{r_1,r_4}\Mt_{i_1r_1i_4r_4}\Mt_{i_1r_1i_4r_4},
    \end{align*}
where the second term in the last equation  is obtained by using the symmetry property of the tensor $\Mt$, i.e., $\Mt_{i_1r_1i_2r_2} = \Mt_{i_2r_2i_1r_1}$. 
Since $\Gt^2\sim\mathcal{N}(\zero,\Ib)$ and $\Gt^2_{(2)}(\Gt^2_{(2)})^\ts\in\RR^{d\times d}$ is a random symmetric positive definite matrix, by standard properties of the Wishart distribution~(see \textit{e.g.}, Section 3.3.6 of~\citep{gupta2018matrix}) we have
$
    R^2k^2\nbr{\Xb}_F^2+2Rk\tr((\Xb^\ts\Xb)^2)+2\EE\sum_{i_1,i_4}\sum_{r_1,r_4}\Mt_{i_1r_1i_4r_4}\Mt_{i_1r_1i_4r_4}.
$
Again, by using Isserlis' theorem element-wise for the tensor $\Gt^2$, we can simplify the third term in above equation
\begin{align*}
    &\EE\sum_{i_1,i_4}\sum_{r_1,r_4}\Mt_{i_1r_1i_4r_4}\Mt_{i_1r_1i_4r_4}&\\
    &= 
    \EE\sum_{i_1,i_4}\sum_{r_1,r_4}\sum_{j_1,j_2,k_1}\sum_{j_3,j_4,k_2}\left(\Xb_{i_1j_1}\Gt^2_{j_1r_1k_1}\Gt^2_{j_2r_4k_1}\Xb_{i_4j_2}\right)\left(\Xb_{i_1j_3}\Gt^2_{j_3r_1k_2}\Gt^2_{j_4r_4k_2}\Xb_{i_4j_4}\right)\\
    &=
     \EE\sum_{i_1,i_4\atop r_1,r_4}\sum_{j_1,j_2,k_1\atop j_3,j_4,k_2}({\scriptstyle \delta_{j_1j_2}\delta_{j_3j_4}\delta_{r_1r_4}
    +
    \delta_{j_1j_3}\delta_{j_2j_4}\delta_{k_1k_2}
    +
    \delta_{j_1j_4}\delta_{j_3j_2}\delta_{k_1k_2}\delta_{r_1r_4}})
    \Xb_{i_1j_1}  \Xb_{i_4j_2}  \Xb_{i_1j_3}  \Xb_{i_4j_4}\\
    &=
    \EE\!\!\!\!\!\!\sum_{i_1,i_4, r_1\atop j_1,j_3, k_1,k_2}\!\!\!\!\!\!
    \Xb_{i_1j_1}  \Xb_{i_4j_1}  \Xb_{i_1j_3}  \Xb_{i_4j_3}
    +
    \EE\!\!\!\!\!\!\sum_{i_1,i_4,j_1,j_4\atop k_1,k_2}\!\!\!\!\!\!
    \Xb_{i_1j_1}  \Xb_{i_4j_4}  \Xb_{i_1j_1}  \Xb_{i_4j_4}
    +
    \EE\!\!\!\!\!\!\sum_{i_1,i_4,r_1,r_4\atop j_1,j_2  k_1,k_2}\!\!\!\!\!\!
    \Xb_{i_1j_1} \Xb_{i_4j_2}  \Xb_{i_1j_2}  \Xb_{i_4j_1}\\
    &=
    Rk^2\tr((\Xb^\ts\Xb)^2) + kR^2\nbr{\Xb}_F^4+kR\tr((\Xb^\ts\Xb)^2).
\end{align*}
Therefore, 
$$\EE[\nbr{\yb}_2^4] = R^2k(k+2)\nbr{\Xb}_F^2+2kR(2+k)\tr((\Xb^\ts\Xb)^2).$$
Finally,
\begin{align*}
    \Var{\nbr{f(\Xb)}_2^2}
    &= 
    \EE[\|k^{-\frac{1}{2}}R^{-\frac{1}{2}}\yb\|_2^4]-\EE[ \|k^{-\frac{1}{2}}R^{-\frac{1}{2}}\yb\|_2^2]^2 =
    \frac{1}{k^2R^2}\EE\nbr{\yb}_2^4-\nbr{\Xb}_F^4\\
    &=
    \frac{1}{k^2R^2}\left(R^2k(k+2)\nbr{\Xb}_F^2+2kR(2+k)\tr((\Xb^\ts\Xb)^2)\right)-\nbr{\Xb}_F^4\\
    &=
    \frac{2}{k}\nbr{\Xb}_F^4+\frac{2}{R}(1+\frac{2}{k})\tr((\Xb^\ts\Xb)^2). \qedhere
\end{align*}
\end{proof}

As we can see for $N=2$, by increasing $k$ the variance does not vanish which validates the fact that the map in~eqn.~(\ref{eq1}) is not a JLT. Using the MPO format to perform a randomized SVD for larges matrices was proposed in~\cite{batselier2018computing} for the first time. As mentioned by the authors, even though numerical experiments demonstrate promising results, the paper suffers from a lack of theoretical guarantees~(e.g., such as probabilistic bounds for the classical randomized SVD~\cite{halko2011finding}). 
The result we just showed in Proposition~\ref{mpo-variance} actually demonstrates that obtaining such guarantees is not possible, since the underlying MPO RP used in \cite{batselier2018computing} is not a JLT. 
As shown in~\cite{rakhshan2020tensorized} this problem can be fixed by enforcing a low rank tensor structure on the rows of the random projection matrix.

\subsection{Tensor Train Random Projection with Rademacher Variables}

We now formally define the map proposed by Rakhshan and Rabusseau~(represented in tensor network diagrams in Figure~\ref{figure:TT-TN}) and show that the probabilistic bounds obtained in~\cite{rakhshan2020tensorized} can be extended to the Rademacher distribution. 

Following the lines in the work done by~\cite{rakhshan2020tensorized} and due to the computational efficiency of TT decomposition, we propose a similar map to $\fTT{R}$ by enforcing a low rank TT  structure on the rows of $\Ab$, where for each row of $\Ab$ the core elements are drawn independently from $\{-1,1\}$ with probability $1/2$, i.e., Rademacher distribution. We generalize and unify the definition of $\fTT{R}$ with Rademacher random projection by first defining the TT distribution and then TT random projection. 
\begin{definition}\label{TTdist}
A tensor $\Tt$ is drawn from a  \emph{TT-Gaussian~(resp. TT-Rademacher) distribution} with rank parameter $R$, denoted by $\Tt\sim \ttnormal(R)$~(resp. $\Tt\sim \ttrad(R)$), if

$$\Tt = \frac{1}{\sqrt{R^{(N-1)}}}\TT{\Gt^1,\Gt^2,\cdots,\Gt^N},$$

where $\Gt^1\in\RR^{1\times d_1\times R} , \Gt^2\in\RR^{R \times d_2 \times R},\cdots, \Gt^{N-1}\in\RR^{R\times d_{N-1}\times R},\Gt^N\in\RR^{R\times d_N\times 1}$ and the entries of each $\Gt^n$ for $n\in[N]$ are drawn independently from the standard normal distribution~(resp. the Rademacher distribution).
\end{definition}

\begin{definition}\label{TTmap}
A \emph{TT Gaussian (resp. TT Rademacher) random projection of rank $R$} is a linear map $\fTT{R}:\RR^{d_1\times\dots  \times d_N}\to\RR^{k}$ defined component-wise by
$$
\left(\fTT{R}(\Xt)\right)_i :=~\frac{1}{\sqrt{kR^{(N-1)}}}\langle \Tt_i,\Xt\rangle,\ i\in[k],
$$
where $\Tt_i\sim\ttnormal(R)$~(resp. $\Tt_i\sim \ttrad(R)$).
\end{definition}

Our main results show that the tensorized Rademacher random projection still benefits from JLT properties as it is an expected isometric map and the variance decays to zero as the random dimension size grows. The following theorems state that using Rademacher random variables instead of standard Gaussian random variables gives us the same results for the bound of the variance while preserving the same lower bound for the size of the random dimension $k$. 
\begin{figure*}[h!]
\begin{center}
\includegraphics[scale=0.6]{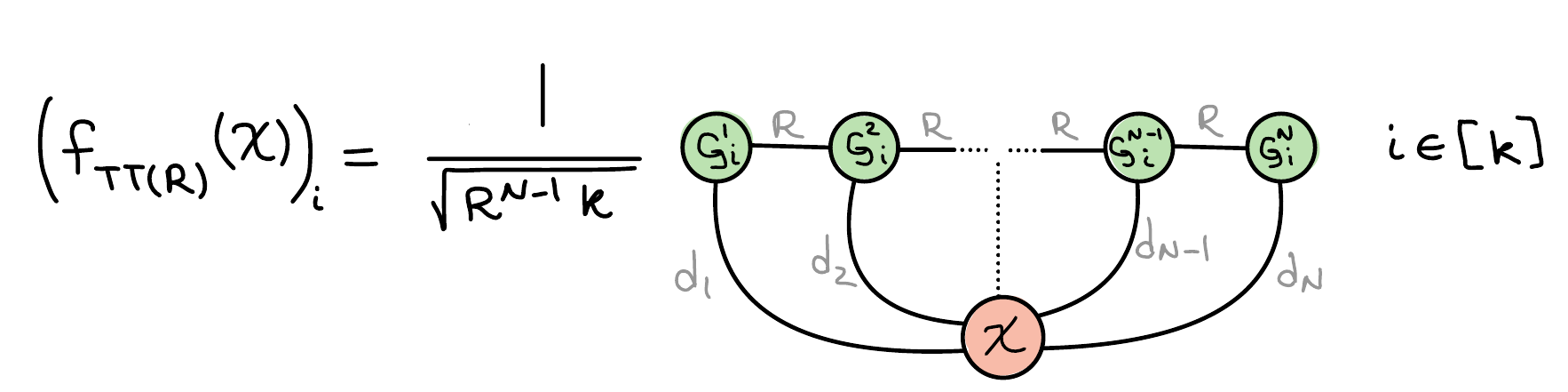}
\caption{Rank $R$ TT RP in tensor network representation.}
\label{figure:TT-TN}
\end{center}
\end{figure*}
\begin{theorem}\label{variance}
Let $\Xt\in\RR^{d_1\times d_2 \times \cdots \times d_N}$ and let $\fTT{R}$ be either a tensorized Gaussian RP or a tensorized Rademacher RP of rank $R$~(see Definition~\ref{TTmap}) . The random projection map $\fTT{R}$  satisfies the following properties:

$\bullet~\Ex{\norm{\fTT{R}(\Xt)}_2^2} = \norm{\Xt}_F^2$

$\bullet~\Var{\norm{\fTT{R}(\Xt)}_2^2}\leq\frac{1}{k}(3\left(1+\frac{2}{R}\right)^{N-1}-1)\left\|{\Xt}\right\|_F^4$
\end{theorem}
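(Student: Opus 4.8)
The plan is to reduce both assertions to moment identities for a single row of the map and then exploit the tensor train structure through a core-by-core ``peeling'' recursion, following the Gaussian analysis of~\cite{rakhshan2020tensorized} and checking that each inequality it uses survives the passage to Rademacher variables --- in fact with every Rademacher moment dominated by the corresponding Gaussian one, so the constants do not change. For the expected isometry, since the $k$ components of $\fTT{R}(\Xt)$ are i.i.d.\ it suffices to compute $\Ex{\langle\Tt,\Xt\rangle^2}$ for one TT-distributed $\Tt=\TT{\Gt^1,\dots,\Gt^N}$. Writing $\langle\Tt,\Xt\rangle^2=\langle\Tt\otimes\Tt,\Xt\otimes\Xt\rangle$ and using independence across cores, $\Ex{\Tt\otimes\Tt}$ is the TT tensor with cores $\Ex{\Gt^n\otimes\Gt^n}$; since $\Ex{\Gt^n_{r,i,s}\Gt^n_{r',i',s'}}=\delta_{rr'}\delta_{ii'}\delta_{ss'}$ holds for both a standard Gaussian and a Rademacher entry (mean zero, unit variance, independent), this ``diagonal'' core is distribution-independent, and contracting it through yields $\Ex{\langle\Tt,\Xt\rangle^2}=R^{N-1}\norm\Xt_F^2$, hence $\Ex{\norm{\fTT{R}(\Xt)}_2^2}=\norm\Xt_F^2$. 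The argument of~\cite{rakhshan2020tensorized} applies here verbatim because it uses only second moments of the entries.

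For the variance, the i.i.d.\ components and the first bullet give $\Var{\norm{\fTT{R}(\Xt)}_2^2}=\tfrac1k\big(R^{-2(N-1)}\Ex{Z^4}-\norm\Xt_F^4\big)$ for the unnormalized quadratic form $Z:=\langle\TT{\Gt^1,\dots,\Gt^N},\Xt\rangle$, so the theorem reduces to $\Ex{Z^4}\le 3\,(R(R+2))^{N-1}\norm\Xt_F^4$. To establish this I would introduce, for $n=1,\dots,N+1$, the partial contraction $\Yt^{(n)}$ of $\Xt$ against the cores $\Gt^n,\dots,\Gt^N$ with the $(n{-}1)$-st TT bond kept open, so that $\Yt^{(N+1)}=\Xt$, $\Yt^{(1)}=Z$, and $\Yt^{(n)}_{i_1,\dots,i_{n-1},r}=\langle\Gt^n_{r,:,:},\,\Yt^{(n+1)}_{i_1,\dots,i_{n-1},:,:}\rangle$, which is measurable with respect to $\Gt^n,\dots,\Gt^N$ only. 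Peeling core $n$: conditioned on $\Gt^{n+1},\dots,\Gt^N$, the variables $Q_r:=\sum_{i_1,\dots,i_{n-1}}\langle\Gt^n_{r,:,:},\,\Yt^{(n+1)}_{i_1,\dots,i_{n-1},:,:}\rangle^2$ are i.i.d.\ over $r\in[R_{n-1}]$ (the slices $\Gt^n_{r,:,:}$ are independent) with $\Ex{Q_r\mid\cdot}=\norm{\Yt^{(n+1)}}_F^2$, and the elementary Rademacher identity $\Ex{(\sum_\alpha c_\alpha\varepsilon_\alpha)^2(\sum_\beta d_\beta\varepsilon_\beta)^2}=\norm c^2\norm d^2+2\langle c,d\rangle^2-2\sum_\alpha c_\alpha^2 d_\alpha^2$, combined with $\tr(M^2)\le(\tr M)^2$ applied to the positive semidefinite Gram matrix of the slices $\{\Yt^{(n+1)}_{i_1,\dots,i_{n-1},:,:}\}$, gives $\Ex{Q_r^2\mid\cdot}\le 3\norm{\Yt^{(n+1)}}_F^4$. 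Therefore $\Ex{\norm{\Yt^{(n)}}_F^4\mid\cdot}=\Ex{(\sum_r Q_r)^2\mid\cdot}\le R_{n-1}(R_{n-1}+2)\norm{\Yt^{(n+1)}}_F^4$ (for $n=1$ the bond has size $R_0=1$ and this is simply the bound $\Ex{(\sum_\alpha c_\alpha\varepsilon_\alpha)^4}\le 3\norm c^4$). Iterating from $n=1$ up to $n=N$ and taking expectations gives $\Ex{Z^4}\le\big(\prod_{n=1}^N R_{n-1}(R_{n-1}+2)\big)\norm\Xt_F^4=3(R(R+2))^{N-1}\norm\Xt_F^4$, as required.

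The single place where the distribution enters is that at each peeling step the relevant Rademacher mixed fourth moment equals its Gaussian analogue minus the nonnegative ``all indices coincide'' correction $2\sum_\alpha c_\alpha^2 d_\alpha^2$ (and, on the last core, $\Ex{(\sum_\alpha c_\alpha\varepsilon_\alpha)^4}=3\norm c^4-2\sum_\alpha c_\alpha^4\le 3\norm c^4$): every step, hence the final constant, is bounded by the Gaussian one, which is exactly why the constant of~\cite{rakhshan2020tensorized} carries over unchanged and one statement covers both distributions. I expect the main obstacle to be bookkeeping rather than conceptual: defining the $\Yt^{(n)}$ and their recursion cleanly, verifying the conditional independence of the $Q_r$ across $r$ (the slices $\Gt^n_{r,:,:}$ are independent, whereas $\Yt^{(n+1)}$ depends only on the later cores), and pinning down the cross term $\sum_{\ib,\ib'}\langle\Yt^{(n+1)}_{\ib},\Yt^{(n+1)}_{\ib'}\rangle^2\le\norm{\Yt^{(n+1)}}_F^4$ (with $\ib,\ib'$ ranging over the first $n-1$ indices) through the trace inequality, so that the recursion lands on exactly the constant $3(R(R+2))^{N-1}$ rather than a looser one. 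The Gaussian case is recovered the same way, with the identities above holding up to the same trace estimate.
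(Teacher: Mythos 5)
Your proposal is correct and follows essentially the same route as the paper: the two moment bounds you derive at each peeling step --- $\EE\langle\varepsilon,c\rangle^4\le 3\|c\|^4$ for the boundary core and the conditional bound $\EE[(\sum_r Q_r)^2\mid\cdot]\le R(R+2)\|\Yt^{(n+1)}\|_F^4$ for an internal bond --- are exactly the paper's Isserlis-type and Wishart-type lemmas for Rademacher entries, and the core-by-core recursion you make explicit is the argument from \cite{rakhshan2020tensorized} that the paper invokes by reference. Your bookkeeping (the two-quadratic-form identity plus $\tr(M^2)\le(\tr M)^2$ for the PSD Gram matrix) lands on the same constants, so the bound $3(R(R+2))^{N-1}\|\Xt\|_F^4$ on the fourth moment and hence the stated variance bound are obtained exactly as in the paper.
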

\begin{proof}

The proof for the Gaussian TT random projection is given in~\cite{rakhshan2020tensorized}. We now show the result for the tensorized Rademacher RP. The proof of the expected isometry part follows the exact same  technique as in~\cite{rakhshan2020tensorized}~(see section 5.1, expected isometry part), we thus omit it here.
Our proof to bound the variance of $\fTT{R}$ when the core elements are drawn independently from a Rademacher distribution relies on the following lemmas.
\begin{lemma}\label{Isserlis-type}
Let $\Ab\in\RR^{m\times n}$ be a random matrix whose entries are i.i.d Rademacher random variables with mean zero and variance one, and let $\Bb\in\RR^{m\times n}$ be a (random) matrix independent of $\Ab$. Then,
\begin{align*}
\EE\langle\Ab,\Bb\rangle^4\leq3\EE\left\|{\Bb}\right\|_F^4.
\end{align*}
\end{lemma}
\begin{proof}
\renewcommand{\qedsymbol}{$\blacksquare$}
Setting $\ab = \vectorize(\Ab)\in\RR^{mn}$ and $\bb = \vectorize(\Bb)\in\RR^{mn}$, we have
\begin{align*}
    \EE\langle\Ab,\Bb\rangle^4 
    =
    \EE\langle\ab,\bb\rangle^4 =
    \EE\langle\ab^{\otimes4},\bb^{\otimes4}\rangle
    = 
    \sum_{i_1,i_2,i_3,i_4}  \EE[\ab_{i_1},\ab_{i_2},\ab_{i_3},\ab_{i_4}]\EE[\bb_{i_1},\bb_{i_2},\bb_{i_3},\bb_{i_4}],
\end{align*}
we can see that in four cases we have non-zero values for $\EE[\ab_{i_1},\ab_{i_2},\ab_{i_3},\ab_{i_4}]$, i.e., 
\begin{align}\label{fourthmoment}
     \EE[\ab_{i_1},\ab_{i_2},\ab_{i_3},\ab_{i_4}] =    \begin{cases}
      1 & \text{if } i_1=i_2=i_3=i_4 ~~ \text{or}\\
      & i_1=i_2\neq i_3=i_4 ~~ \text{or}\\
      & i_1=i_3\neq i_2=i_4 ~~ \text{or}\\
      &i_1=i_4 \neq i_2=i_3.\\
      0 & \text{otherwise}.
    \end{cases}   
\end{align}
Therefore, 
\begin{align*}
    \EE\langle\Ab,\Bb\rangle^4 
    &=
    \sum_{i_1} \EE[\ab_{i_1}^4]\EE[\bb_{i_1}^4]+
    \sum_{i_1\neq i_3}\EE[\ab_{i_1}^2]\EE[\ab_{i_3}^2]\EE[\bb_{i_1}^2]\EE[\bb_{i_3}^2] +  \sum_{i_1\neq i_4}\EE[\ab_{i_1}^2]\EE[\ab_{i_4}^2]\EE[\bb_{i_1}^2]\EE[\bb_{i_4}^2]\\
    &\ \ \ \ 
    +\sum_{i_1\neq i_2}\EE[\ab_{i_1}^2]\EE[\ab_{i_2}^2]\EE[\bb_{i_1}^2]\EE[\bb_{i_2}^2].
    \end{align*}
Since $\EE[\ab_{i_1}^4]=\EE[\ab_{i_1}^2]=\EE[\ab_{i_2}^2]=\EE[\ab_{i_3}^2]=\EE[\ab_{i_4}^2] = 1$, the equation above can be simplified as 
\begin{align*}
    \EE\langle\Ab,\Bb\rangle^4
    &= 
    \sum_{i_1} \EE[\bb_{i_1}^4]+
    \sum_{i_1\neq i_3}\EE[\bb_{i_1}^2]\EE[\bb_{i_3}^2] +  \sum_{i_1\neq i_4}\EE[\bb_{i_1}^2]\EE[\bb_{i_4}^2] +
    \sum_{i_1\neq i_2}\EE[\bb_{i_1}^2]\EE[\bb_{i_2}^2]\\
    &=
    \EE\sum_{i_1}\bb_{i_1}^4 + \EE\sum_{i_1,i_3}\bb_{i_1}^2\bb_{i_3}^2-\EE\sum_{i_1=i_3}\bb_{i_1}^4+\EE\sum_{i_1,i_4}\bb_{i_1}^2\bb_{i_4}^2\\
    &-
    \EE\sum_{i_1=i_4}\bb_{i_1}^4+\EE\sum_{i_1,i_2}\bb_{i_1}^2\bb_{i_2}^2-\EE\sum_{i_1=i_2}\bb_{i_1}^4
    =
    3\EE\norm{\Bb}_F^4-2\EE\norm{\bb}_4^4\leq 3\EE\norm{\Bb}_F^4.
\end{align*}
\end{proof}\renewcommand{\qedsymbol}{$\blacksquare$}
\begin{lemma}\label{wishart-type}
Let $\Ab\in\RR^{d\times R}$ be a random matrix whose entries are i.i.d Rademacher random variables with mean zero and variance one, and let $\Bb\in\RR^{p\times d}$ be a random matrix independent of $\Ab$, then
$$\EE\norm{\Bb\Ab}_F^4 \leq R(R+2)\EE\norm{\Bb}_F^4.$$

\begin{proof}
Setting $\Mb = \Bb^\ts\Bb$ we have
\begin{align*}
  \EE\nbr{\Bb\Ab}_F^4 
   &= 
   \EE\left[\tr\left(\Bb^\ts\Bb\Ab\Ab^\ts\right)\tr\left(\Bb^\ts\Bb\Ab\Ab^\ts\right)\right] 
   =
   \EE\langle\Mb,\Ab\Ab^\ts\rangle^2\\
   &=
   \!\!\!\!\sum_{i_1,i_2,i_3,i_4}\!\!\!\!\!\EE[(\Ab\Ab^\ts)_{i_1,i_2}(\Ab\Ab^\ts)_{i_3,i_4}]\EE[\Mb_{i_1i_2}\Mb_{i_3i_4}]\\
   &=
   \!\!\!\!\!\!
   \sum_{i_1,i_2,i_3,i_4}\sum_{j,k}\EE[\Ab_{i_1j}\Ab_{i_2j}\Ab_{i_3k}\Ab_{i_4k}]\EE[\Mb_{i_1i_2}\Mb_{i_3i_4}].\\
\end{align*}
Since the components of $\Ab$ are drawn from a Rademacher distribution, the non-zero summands in the previous equation can be grouped in four cases~(which follows from Eq.~\eqref{fourthmoment}):
\begin{align*}
  \EE\nbr{\Bb\Ab}_F^4 
   &= 
   \sum_{i_1\in[d]}\sum_{j,k\in[R]}\EE[\Ab_{i_1j}^2\Ab_{i_1k}^2]\EE[\Mb_{i_1i_1}^2]&(i_1=i_2=i_3=i_4)\\
   &+
   \sum_{i_1\in [d], \atop i_3\in [d] \setminus \{i_1\}}\sum_{j,k\in[R]}\EE[\Ab_{i_1j}^2\Ab_{i_3k}^2]\EE[\Mb_{i_1i_1}\Mb_{i_3i_3}]&(i_1=i_2\neq i_3=i_4)\\
   &+
    \sum_{i_1\in [d], \atop i_2\in [d] \setminus \{i_1\}}\sum_{j,k\in[R]}\EE[\Ab_{i_1j}\Ab_{i_2j}\Ab_{i_2k}\Ab_{i_1k}]\EE[\Mb_{i_1i_2}\Mb_{i_2i_1}]&(i_1=i_4\neq i_2=i_3)\\
   &+
    \sum_{i_1\in [d], \atop i_4\in [d] \setminus \{i_1\}}\sum_{j,k\in[R]}\EE[\Ab_{i_1j}\Ab_{i_4j}\Ab_{i_1k}\Ab_{i_4k}]\EE[\Mb_{i_1i_4}^2]&(i_1=i_3\neq i_2=i_4)
\end{align*}
Now by splitting the summations over $j,k\in[R]$ in two cases $j=k$ and $j\neq k$, and observing that the 3rd and 4th summands  in the previous equation vanish when $j\neq k$, we obtain
\begin{align*}
\MoveEqLeft
   \EE\nbr{\Bb\Ab}_F^4\\
&= 
    \sum_{i_1\in[d]}\sum_{j\in[R]}\EE[\Ab_{i_1j}^4]\EE[\Mb_{i_1i_1}^2]
    + \!\!\!\!
    \sum_{i_1\in[d]}\sum_{j \in [R], \atop k\in[R]\setminus\{j\}}\EE[\Ab_{i_1j}^2\Ab_{i_1k}^2]\EE[\Mb_{i_1i_1}^2]\\
    &+\!\!\!\!
     \sum_{i_1\in [d], \atop i_3\in [d] \setminus \{i_1\}}\sum_{j\in[R]}\EE[\Ab_{i_1j}^2\Ab_{i_3j}^2]\EE[\Mb_{i_1i_1}\Mb_{i_3i_3}]
    +\!\!\!\!
    \sum_{i_1\in [d], \atop i_3\in [d] \setminus \{i_1\}}\sum_{j \in [R], \atop k\in[R]\setminus\{j\}}\EE[\Ab_{i_1j}^2\Ab_{i_3k}^2]\EE[\Mb_{i_1i_1}\Mb_{i_3i_3}]\\
    &+\!\!\!\!
    \sum_{i_1\in [d], \atop i_2\in [d] \setminus \{i_1\}}\sum_{j\in[R]}\EE[\Ab_{i_1j}^2\Ab_{i_2j}^2]\EE[\Mb_{i_1i_2}\Mb_{i_2i_1}]
    +\!\!\!\!
    \sum_{i_1\in [d], \atop i_4\in [d] \setminus \{i_1\}}\sum_{j\in[R]}\EE[\Ab_{i_1j}^2\Ab_{i_4j}^2]\EE[\Mb_{i_1i_4}^2].
\end{align*}
Since $\EE[\Ab_{i_1j}^4]=1$ and $\EE[\Ab_{i_1j}^2\Ab_{i_1k}^2]=1$ whenever $j\neq k$, it follows that
\begin{align*}
    \EE&\nbr{\Bb\Ab}_F^4 
    =
    R^2\left(\sum_{i_1\in[d]}\!\!\!\EE[\Mb_{i_1i_1}^2] + \!\!\!\!\!\!\sum_{i_1\in [d], \atop i_3\in [d] \setminus \{i_1\}}\!\!\!\!\!\!\!\!\EE[\Mb_{i_1i_1}\Mb_{i_3i_3}]  \right)
    \!\!+\!
    R\!\left(\!\sum_{i_1\in [d], \atop i_2\in [d] \setminus \{i_1\}}\!\!\!\!\!\!\!\!\EE[\Mb_{i_1i_2}\Mb_{i_2i_1}] + \!\!\!\!\!\!\!\sum_{i_1\in [d], \atop i_4\in [d] \setminus \{i_1\}}\!\!\!\!\!\!\!\!\EE[\Mb_{i_1i_4}^2] \right)\\
    &=
    R^2~\EE\sum_{i_1,i_3\in[d]}\Mb_{i_1i_1}\Mb_{i_3i_3} + R~\EE\sum_{i_1,i_2\in[d]}\Mb_{i_1i_2}\Mb_{i_2i_1}
    + R~\EE\sum_{i_1,i_4\in[d]}\Mb_{i_1i_4}^2
    - 2R~\EE\sum_{i_1\in[d]}\Mb_{i_1i_1}^2\\
    &\leq
    R^2~\EE[\tr(\Mb)^2] + 
    R~\EE\sum_{i_1,i_2\in[d]}\Mb_{i_1i_2}\Mb_{i_2i_1} + R~\EE\sum_{i_1,i_4\in[d]}\Mb_{i_1i_4}^2\\
    &=
    R^2\EE[\tr(\Bb^\ts\Bb)^2]  + 
    2R \EE[\tr((\Bb^\ts\Bb)^2)],
\end{align*}
where in the last equation, we used the fact that $\Mb = \Bb^\ts\Bb$ is symmetric. 
Finally, by the submultiplicavity property of the Frobenius norm, we obtain%

\begin{align*}
    \EE\nbr{\Bb\Ab}_F^4 &= R^2 \EE\nbr{\Bb}_F^4 + 2R  \EE\nbr{\Bb^\ts\Bb}_F^2
    \leq 
    R^2\EE\nbr{\Bb}_F^4+2R\EE\nbr{\Bb}_F^4 = R(R+2)\EE\nbr{\Bb}_F^4. \qedhere
\end{align*}
\end{proof}
\end{lemma}

By using these lemmas and the exact same proof technique as in~\cite{rakhshan2020tensorized} one can find the bound for the variance~(\eg see section 5.1, bound on the variance of $\fTT{R}$ part). 
\end{proof}

By employing Theorem~\ref{variance}, Theorem 5 in~\cite{rakhshan2020tensorized} and the hypercontractivity concentration inequality~\cite{schudy2012concentration} we obtain the following theorem which leverages the bound on the variance to give a probabilistic bound on the RP's quality. 
\begin{theorem}\label{JLpropertyCPTT}
Let $P \subset\RR^{d_1\times d_2 \times \cdots \times d_N} $ be a set of $m$ order $N$ tensors. Then, for any $\varepsilon>0$ and any $\delta>0$, the following hold simultaneously for all $\Xt\in P$:

$$
    \PP(\left\|{\fTT{R}(\Xt)}\right\|_2^2 = (1\pm\varepsilon)\left\|{\Xt}\right\|_F^2)\geq 1-\delta~ ~~\text{if}~~~k\gtrsim \varepsilon^{-2}(1+2/R)^N\mathrm{log}^{2N}\left(\frac{m}{\delta}\right).$$
\begin{proof}
The proof follows the one of Theorem 2  in~\cite{rakhshan2020tensorized} \textit{mutatis mutandi}.
\end{proof}
\end{theorem}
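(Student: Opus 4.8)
The plan is to wrap a union bound around the variance estimate of Theorem~\ref{variance}, using the hypercontractive concentration inequality of~\cite{schudy2012concentration} to turn ``small variance'' into ``exponentially small tail''. By homogeneity of $\fTT{R}$ we may fix $\Xt\in P$ and assume $\norm{\Xt}_F=1$, so that $Z_\Xt:=\norm{\fTT{R}(\Xt)}_2^2$ has $\EE[Z_\Xt]=1$ (expected isometry part of Theorem~\ref{variance}) and $\Var{Z_\Xt}\le \frac1k\big(3(1+2/R)^{N-1}-1\big)=:V/k$ (variance part of Theorem~\ref{variance}). It then suffices to prove $\PP(|Z_\Xt-1|>\varepsilon)\le \delta/m$ whenever $k$ meets the stated lower bound, and a union bound over the $m$ elements of $P$ yields the simultaneous statement.

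The crucial structural observation is that $Z_\Xt$ is a polynomial of degree $2N$ in the independent entries of the core tensors: each $\langle\Tt_i,\Xt\rangle$ is homogeneous of degree $N$ in the cores of row $i$ (a sum of products of one entry from each of the $N$ cores), so its square has degree $2N$, and $Z_\Xt$ is the average of $k$ such squares built from disjoint, independent blocks of Rademacher (resp.\ standard Gaussian) variables. The hypercontractivity concentration inequality applies to low-degree polynomials of exactly such variables and, for a degree-$q$ polynomial $f$, yields a tail bound of the shape $\PP(|f-\EE f|\ge t)\le e^2\exp\!\big(-c_q\,(t^2/\Var{f})^{1/q}\big)$. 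Taking $q=2N$, $f=Z_\Xt$, $t=\varepsilon$ and $\Var{Z_\Xt}\le V/k$ gives $\PP(|Z_\Xt-1|\ge\varepsilon)\le e^2\exp\!\big(-c_{2N}(\varepsilon^2 k/V)^{1/(2N)}\big)$; forcing the right-hand side below $\delta/m$ is equivalent to $\varepsilon^2 k/V\gtrsim\log^{2N}(m/\delta)$, i.e.\ $k\gtrsim\varepsilon^{-2}V\log^{2N}(m/\delta)$, and since $V=3(1+2/R)^{N-1}-1\le 3(1+2/R)^N$ this is the advertised bound $k\gtrsim\varepsilon^{-2}(1+2/R)^N\log^{2N}(m/\delta)$.

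Since the Rademacher assumption enters only through the variance bound, which Theorem~\ref{variance} has already matched (up to constants) to the Gaussian case, the remainder is literally the argument of Theorem~2 in~\cite{rakhshan2020tensorized} \emph{mutatis mutandis}. The one place requiring care is the invocation of~\cite{schudy2012concentration}: one must check that its hypotheses (phrased there through a decreasing sequence of derivative-moment parameters $\mu_0\ge\mu_1\ge\dots\ge\mu_{2N}$ of $f$, with $\mu_0=\EE f$) are indeed controlled by the scalar variance bound at hand -- which works here because every such intermediate moment is, exactly as in~\cite{rakhshan2020tensorized}, dominated by a power of $(1+2/R)$ times $\norm{\Xt}_F^4$ -- and to track how the polynomial degree $2N$ propagates into the exponent $1/(2N)$, and hence into the $\log^{2N}(m/\delta)$ and $(1+2/R)^N$ factors in the final condition on $k$. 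Apart from that bookkeeping, the proof is a routine union-bound wrapper around Theorem~\ref{variance}.
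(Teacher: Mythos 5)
Your proposal is correct and follows exactly the route the paper intends: the paper's one-line proof defers to Theorem~2 of~\cite{rakhshan2020tensorized}, whose argument is precisely the combination you describe --- the variance bound of Theorem~\ref{variance}, the hypercontractivity concentration inequality of~\cite{schudy2012concentration} applied to the degree-$2N$ polynomial $\norm{\fTT{R}(\Xt)}_2^2$, and a union bound over the $m$ tensors. You have in fact supplied more detail than the paper does, including the correct caveat that the Schudy--Sviridenko inequality requires controlling all intermediate derivative-moment parameters rather than just the variance.
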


\section{Experiments}
\begin{figure*}[th!]
\begin{center}
\includegraphics[width=\textwidth]{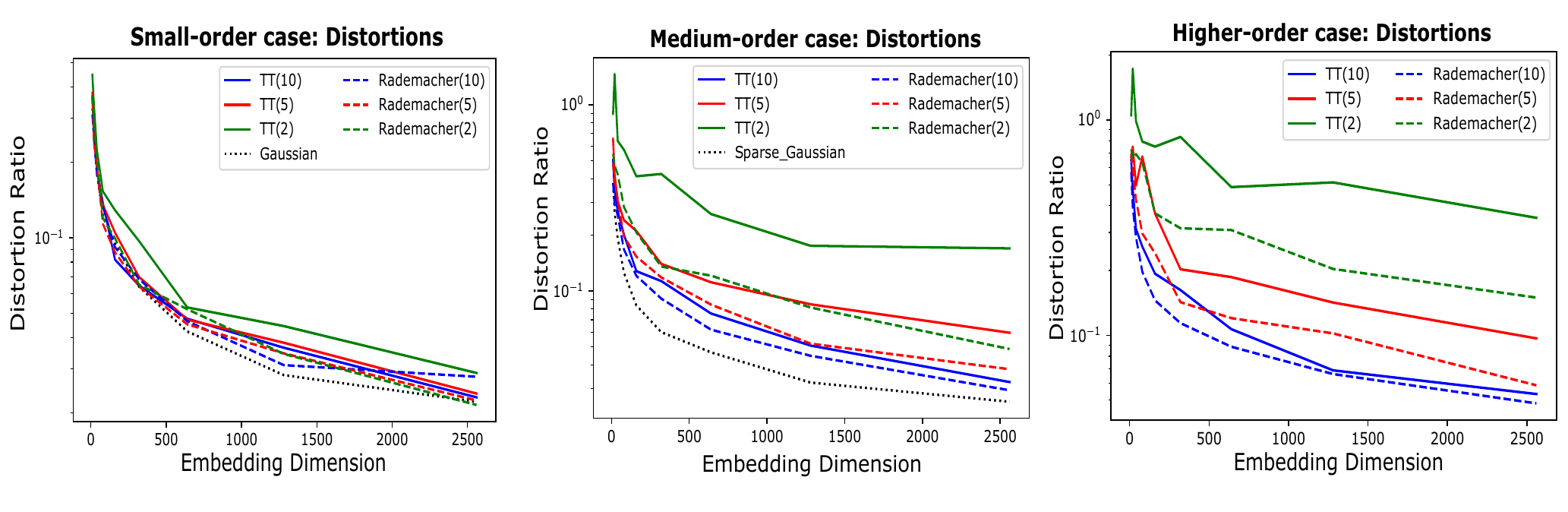}
\caption{Comparison of the distortion ratio of tensorized Rademacher and tensorized Gaussian RPs and classical Gaussian RP for small-order~(left), medium-order~(center) and high-order~(right) input tensors for different value of the rank parameter $R$.}
\label{figure:distortions}
\end{center}
\end{figure*}

We first  compare the embedding performance of tensorized Rademacher and tensorized Gaussian RPs with classical Gaussian and very sparse~\cite{li2006very} RPs on synthetic data for different size of input tensor and rank parameters. Second, to illustrate that the MPO RPs used in~\cite{batselier2018computing,feng2020tensor} are not well-suited dimension reduction maps, we compare the Gaussian RP $\fTT{R}$ proposed in~\cite{rakhshan2020tensorized} with the MPO RP defined in Section~\ref{subsec:MPO-RP}\footnote{For these experiments we use \textit{TT-Toolbox v2.2}~\cite{TT_Toolbox}.}.
For both parts, the synthetic $N$-th order $d$ dimensional tensor $\Xt$ is generated in the TT format with the rank parameter equals to 10 with the entries of each core tensors drawn independently from the standard Gaussian distribution. 

To compare tensorized Rademacher and Gaussian RPs, following~\cite{rakhshan2020tensorized} we consider three cases for different rank parameters:
small-order $(d=15,N=3)$, medium-order $(d=3,N=12)$ and high-order $(d=3,N=25)$.
The embedding quality of each map is evaluated using the average distortion ratio $D(f,\Xt) = \abs{\frac{\nbr{f(\Xt)}^2}{\nbr{\Xt}^2}-1}$ over 100 trials and is reported as a function of the projection size $k$ in Figure~\ref{figure:distortions}.
Note that due to memory requirements, the high order case cannot be handled with Gaussian or very sparse RPs. As we can see in the small-order case both tensorized maps perform competitively with classical Gaussian RP for all values of the rank parameter. In medium and high order cases, the quality of embedding of the tensorized Rademacher RP outperforms  tensorized Gaussian RP for each value of the rank parameter. Moreover, the tensorized Rademacher RP gives us this speed up as there is no multiplication requirement in the calculations. This is shown in Figure~\ref{figure:timeevaluation}~(right) where we report the time complexity of tensorized Rademacher RP vs tensorized Gaussian RP.
\begin{figure*}[h!]
\begin{center}
\includegraphics[scale=0.9]{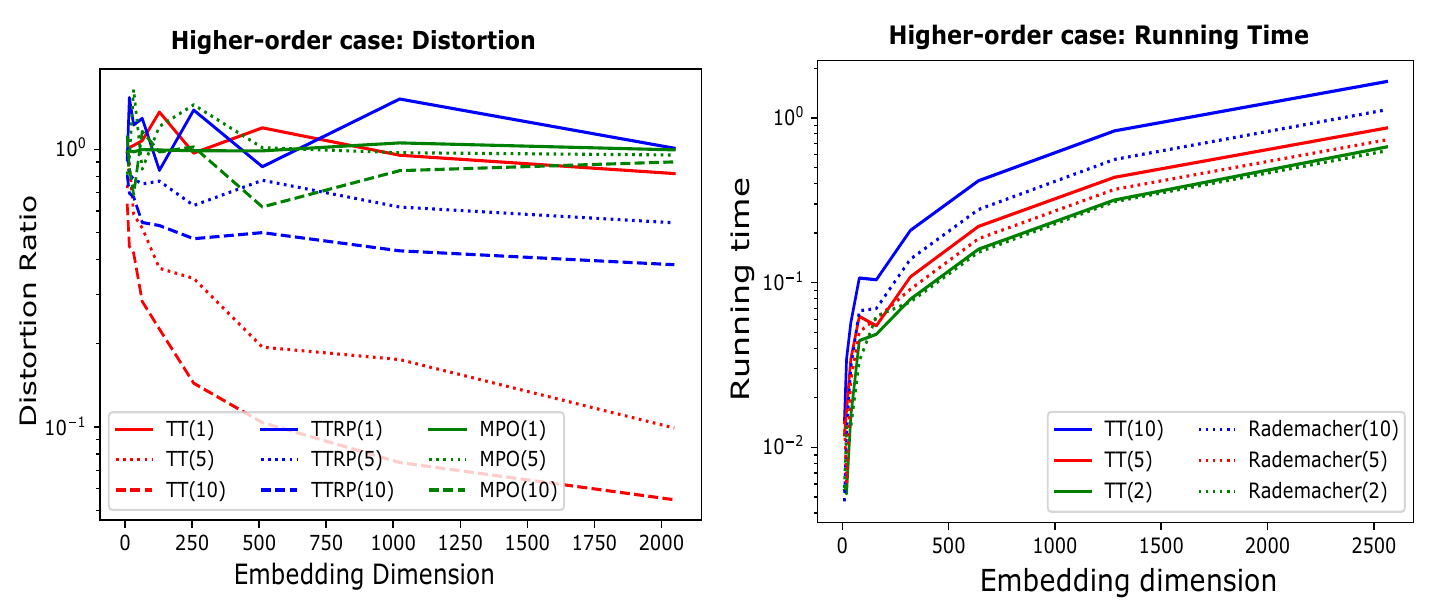}
\caption{Comparison of distortion ratio of tensorized Gaussian RP and MPO RPs proposed in~\cite{feng2020tensor}~(denoted by TTRP) and~\cite{batselier2018computing}~(denoted by MPO) for the  higher-order case with different values for the rank parameter~(left). Comparison of the running times between tensorized Rademacher and tensorized Gaussian RPs~(right).}
\label{figure:timeevaluation}
\end{center}
\end{figure*}

To validate the theoretical analysis in Proposition~\ref{mpo-variance}, we  consider the higher-order case $(d = 3, N = 25)$ and compare the Gaussian RP $\fTT{R}$ with the MPO RPs proposed in~\cite{batselier2018computing,feng2020tensor} for different values of the rank parameter $R = 1, 5, 10$. These values correspond to roughly the same number of parameters that the two maps require. The quality of embedding via average distortion ratio over 100 trials is reported in Figure~\ref{figure:timeevaluation} where we see that even by increasing the rank parameter of the MPO RPs, the quality of the embedding does not reach acceptable levels which is predicted by our analysis.

\section{Conclusion}

We presented an extension of the tensorized Gaussian embedding proposed in~\cite{rakhshan2020tensorized} for high-order tensors: Tensorized Rademacher random projection map. Our theoretical and empirical analysis show that the Gaussian tensorized RP in~\cite{rakhshan2020tensorized} can be replaced by the tensorized Rademacher RP while still benefiting from the JLT  properties.
We also showed, both in theory and practice, the RP in an MPO format is not a suitable dimension reduction map. 
Future research directions include leveraging and developing efficient sketching algorithms relying on tensorized RPs to find  theoretical guarantees for randomized SVD and regression problems of very high-dimensional matrices given in the TT format. 

\subsection*{Acknowledgment}
This research was supported by the Canadian Institute
for Advanced Research (CIFAR AI chair program) and the Natural Sciences and Engineering Research Council of Canada (Discovery program,
RGPIN-2019-05949). BR was also supported by an IVADO Fellowship.

\bibliographystyle{plainnat}
\bibliography{biblio.bib}

\end{document}